
\documentclass[10pt,twocolumn,letterpaper]{article}

\usepackage{cvpr}              

\usepackage{graphicx}
\usepackage{amsmath}
\usepackage{amssymb}
\usepackage{booktabs}

%
\usepackage[pagebackref,breaklinks,colorlinks]{hyperref}

\usepackage[capitalize]{cleveref}
\crefname{section}{Sec.}{Secs.}
\Crefname{section}{Section}{Sections}
\Crefname{table}{Table}{Tables}
\crefname{table}{Tab.}{Tabs.}



\usepackage{overpic}
\usepackage{enumitem} 
\usepackage{overpic} 
\usepackage{color}

\definecolor{turquoise}{cmyk}{0.65,0,0.1,0.3}
\definecolor{purple}{rgb}{0.65,0,0.65}
\definecolor{dark_green}{rgb}{0, 0.5, 0}
\definecolor{orange}{rgb}{0.8, 0.6, 0.2}
\definecolor{red}{rgb}{0.8, 0.2, 0.2}
\definecolor{darkred}{rgb}{0.6, 0.1, 0.05}
\definecolor{blueish}{rgb}{0.0, 0.3, .6}
\definecolor{light_gray}{rgb}{0.7, 0.7, .7}
\definecolor{pink}{rgb}{1, 0, 1}
\definecolor{greyblue}{rgb}{0.25, 0.25, 1}






\usepackage{blindtext}

\renewcommand{\paragraph}[1]{\vspace{1em}\noindent\textbf{#1}.}

\usepackage{amsthm}
\newtheorem{lemma}{Lemma}
\newtheorem{theorem}{Theorem}

\usepackage{algorithm}
\usepackage{algpseudocode}
\usepackage{xcolor}

\usepackage{float}
\begin{document}
\title{Hierarchical Nearest Neighbor Graph Embedding for Efficient Dimensionality Reduction}

\author{M. Saquib Sarfraz  $^{*1,2}$,  Marios Koulakis $^{*1}$, Constantin Seibold$^{1}$, Rainer Stiefelhagen$^{1}$ \\
\small{$^{1}$ Karlsruhe Institute of Technology,
$^{2}$ Mercedes-Benz Tech Innovation} \\
\small{$^{*}$ Equal contribution
}
}
\maketitle
\begin{abstract}
Dimensionality reduction is crucial both for visualization and preprocessing high dimensional data for machine learning. We introduce a novel method based on a hierarchy built on 1-nearest neighbor graphs in the original space which is used to preserve the grouping properties of the data distribution on multiple levels. The core of the proposal is an optimization-free projection that is competitive with the latest versions of t-SNE and UMAP in performance and visualization quality while being an order of magnitude faster in run-time. 
Furthermore, its interpretable mechanics, the ability to project new data, and the natural separation of data clusters in visualizations make it a general purpose unsupervised dimension reduction technique.
In the paper, we argue about the soundness of the proposed method and evaluate it on a diverse collection of datasets with sizes varying from 1K to 11M samples and dimensions from 28 to 16K. 
We perform comparisons with other state-of-the-art methods on multiple metrics and target dimensions highlighting its efficiency and performance. Code is available at \url{https://github.com/koulakis/h-nne}
\end{abstract}

\section{Introduction}
\label{sec:intro}
\setlength{\belowcaptionskip}{-16pt}
\setlength{\abovecaptionskip}{-4pt}

\begin{figure}[t]
\begin{center}
\includegraphics[width=\linewidth]{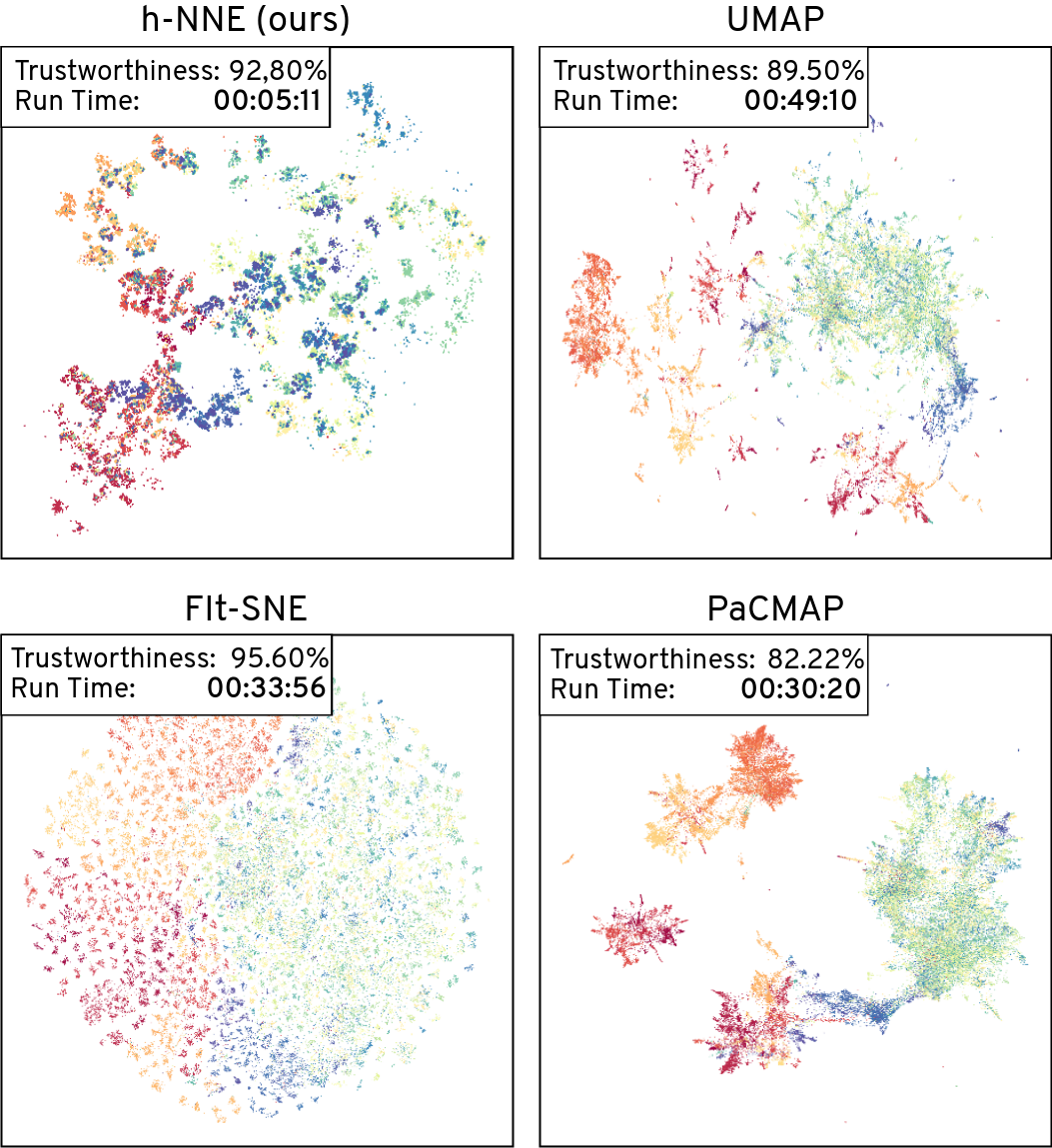}
\end{center}
\caption{
Visualization of the entire ImageNet dataset. Speed and embedding quality of dimension reduction methods.
}
\label{fig:teaser}
\end{figure}
Dimensionality reduction techniques are now increasingly used in many fields of science and have to cope with an ever increasing size of real-world datasets. 
It plays an important role both for visualization and processing of high dimensional data.  
Much of current research is focused on finding unsupervised algorithms that are both scalable to massive data and are able to preserve the structure of data in less dimensions. Most of them attempt to retain the local or global structure of the data by optimizing over pairwise distances in the target space. 
Two main directions for the current dimension reduction techniques can be identified with respect to how such local or global neighborhood is preserved in terms of the distances. Methods such as PCA~\cite{hotelling1933analysis}, MDS~\cite{kruskal1964multidimensional} and Sammom mapping~\cite{sammon1969nonlinear} try to preserve the global distances among all samples in the data. Whereas more recent popular methods such as t-SNE~\cite{tsne,bh_tsne}, LargeVis~\cite{tang2016visualizing}, and UMAP~\cite{umap,pumap} seek to additionally preserve the local structure e.g. by preserving the distance relations in the k-neighborhood of each data sample. To retain such relations, these methods generally have to solve an optimization problem with the goal of matching the distribution of distances in the target space with their distribution in the original space. For instance, t-SNE minimizes the Kullback-Leibler divergence between distributions of k-nearest neighbor (k-NN) distances fitted in the high and low-dimensional space. Similarly, the more recent method UMAP optimizes the embedding in the target space with the goal of preserving the 1-skeleton of fuzzy simplicial sets constructed in the original space. Such optimizations are computationally expensive in nature and account for the main complexity of these algorithms, thus limiting their run-time performance on large scale datasets. 

In this paper, we present a different approach which instead of relying on point-level optimization, captures multi-stage NN properties of the data and, using those, projects points in a simple algorithmic way. The main tools used to build this structure are Nearest Neighbor Graphs (NNGs) which have been well studied. In~\cite{eppstein1997nearest} Epstein~\etal show that for a 1-NNG, its NN relations are well preserved in a low dimensional space when the edges are placed on a monotone logical grid~\cite{boris1986vectorized}. An effective strategy for embedding the NNG into an \textit{l}-dimensional grid is to embed the individual components of the graph separately. The connected components of NNGs capture clusters of samples.  Recursively building 1-NNGs on the previously obtained connected components provides a hierarchical view on how samples are merged together at successive levels. Considering each connected component as a node in the hierarchy one can identify complete paths on how these nodes and their associated samples successively merge together from bottom to top. Such a hierarchical node graph provides a view of data in terms of how the local neighborhood is distributed in the high dimensional space. After an inexpensive preliminary projection of the high dimensional data on a desired low dimensional space we can use the original hierarchical node graph in the target space to enforce the local structure directly. We achieve this with a fast recursive top-down approach by moving the clusters of samples towards these nodes starting at the top level with the least number of nodes and moving progressively downward to reach the bottom \ie the finer level. 

Since our proposal is rooted in obtaining the Hierarchical 1-Nearest Neighbor graph based Embedding, we term the method \textbf{h-NNE}. Figure~\ref{fig:teaser} depicts an example of embedding the ResNet-50 features of the full ImageNet dataset. As seen, in comparison to the current state-of-the-art methods, not only do we achieve competitive embedding quality - indicated by the trustworthiness metric - but also a significantly faster run-time. To summarize, our main contribution is an alternative dimensionality reduction and visualization method which does not rely on expensive optimization methods. This makes it operate at a magnitude faster than existing methods, without requiring hyperparameter tuning, and maintaining similar performance.
In the following sections, before delving into the proposed method, we will discuss the related works to place it in context and followed by experiments and comparisons with the state-of-the-art on a diverse collection of datasets. 
\setlength{\abovecaptionskip}{-4pt}

\begin{figure*}
\begin{center}
\includegraphics[width=\linewidth]{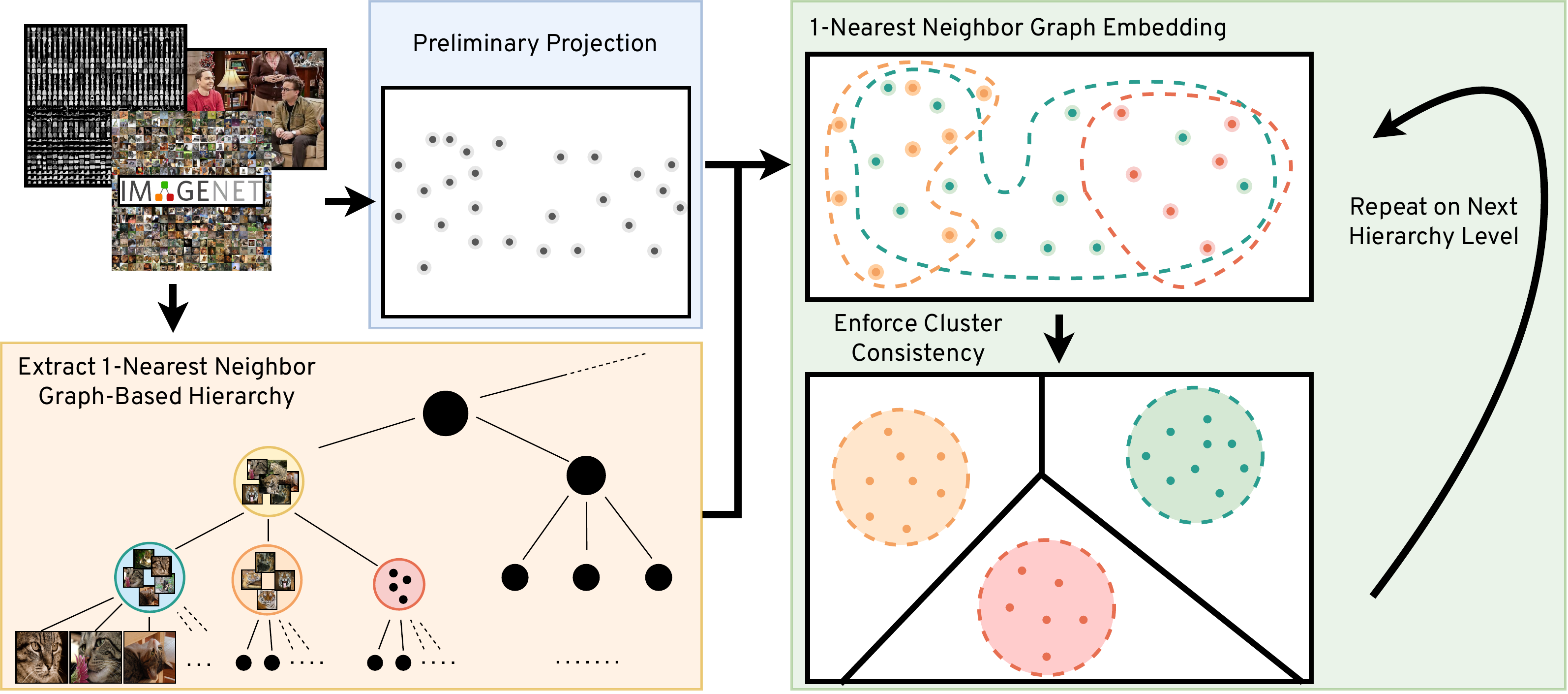}
\end{center}
\caption{
A summary of the h-NNE projection method.
}
\label{fig:overview}
\end{figure*} 

\section{Related work}
\label{sec:related}
Most of the current state-of-the-art unsupervised dimension reduction techniques aim at preserving the local pairwise distances in a projected manifold. 
Hinton and Roweis~\cite{hinton2002stochastic} introduced the Stochastic Neighborhood Embedding (SNE),
which creates a low-dimensional embedding by enforcing the conditional probabilities (euclidean distances converted to similarities) in the lower dimension to be similar to those in the higher dimension. This is achieved by fitting Gaussian distributions on the samples and matching them to distributions in the lower dimension. Building on SNE, t-Distributed Stochastic Neighbor Embedding (t-SNE)~\cite{tsne} substituted the Gaussian distribution, used in the low-dimensional space, with long-tailed t-distributions. t-SNE is inherently computationally expensive. Follow-up works of t-SNE such as Barnes-Hut t-SNE ~\cite{bh_tsne}, viSNE~\cite{amir2013visne}, FIt-SNE~\cite{linderman2019fast,linderman2017efficient}, and opt-SNE~\cite{belkina2019automated}, have further improved t-SNE so it can converge faster and scale better to larger datasets. Much of the followup work follows the same direction. For example, LargeVis~\cite{tang2016visualizing} and the current state-of-the art methods like  UMAP~\cite{umap} and PaCMAP~\cite{pacmap} have objective functions building on that of t-SNE. These methods focus on improving efficiency and preservation of more global structure along with the local structure. This is commonly achieved by construction and operations on weighted graphs. Typically a k-NN graph is constructed in the original space and then used to enforce the k-neighborhood of each sample point in a preliminary projected space. All current state-of-the-art methods such as t-SNE~\cite{tsne}, LargeVis~\cite{tang2016visualizing},  UMAP~\cite{umap} and PaCMAP~\cite{pacmap} are heavily related in terms of this underlying process. A more thorough discussion on this k-NN graph optimization view is provided in McInnes~\etal~\cite{umap} that compares t-SNE, LargeVIs and UMAP optimization equations in this context. The difference between these techniques comes from the changes in the objective function that is used for optimizing the projected embedding. For instance a common goal of optimization is to attract together samples that lie in the same k-neighborhood and repulse them from far away samples. If viewed in terms of the k-NN graph layout this amounts to defining and using a set of attractive forces applied along edges and a set of repulsive forces applied among vertices. This is a non-convex optimization problem and convergence is achieved by carefully changing the attraction and repulsion forces through gradient descent based learning. In effect it is contrastive learning by using samples in k-neighborhoods as positives to the current data point and sampling negatives from the rest. To reduce cost, UMAP employs negative sampling to pick a subset of samples for the repulsive force whereas the more recent PaCMAP defines highly-weighted near-pairs and  mid-near pairs to help preserve more structure. A related strategy is used by triplet constraint methods such as TriMap~\cite{amid2019trimap}
which is initialized with the low dimensional PCA embedding. This embedding is then modified using a set of carefully selected triplets from the high-dimensional representation. Finally, t-SNE and UMAP methods have their parametric versions~\cite{van2009learning, pumap} where they train an MLP with a similar objective function. The parametric methods simplify the projection of new datapoints. 
Another important consideration in these methods is their reliance on user supplied hyperparameters. Apart from specifying the number of neighbors k to construct the graph, several other optimization specific parameters are also required. 

Our approach is a major shift in contrast to these current methods. Instead of building a weighted k-NN graph, we create a clustering hierarchy by recursively building 1-NN graphs with static edge links.
This hierarchy is then used to move samples in the lower dimensional space without requiring the use of gradient descent based optimization. Simultaneously, this removes the reliance on specific hyperparameters. This provides us with a highly efficient and scalable dimension reduction method. 

\section{The h-NNE algorithm}

Our projection algorithm consists of three main steps: building a tree hierarchy based on 1-NNGs, computing a preliminary projection with an approximate version of PCA and adjusting the projected point locations based on the constructed tree. The projected point location adjustment can be enhanced with an optional inflation step which can be used to improve visualization. In the following sections, we will elaborate on each step and provide some evidence of their validity. Figure \ref{fig:overview} gives an overview of the method.

\subsection{Nearest neighbor graph based hierarchy}

Several projection methods start by defining a structure over the data which encodes the relative positions of points and then project in a way that preserves this structure. For example, UMAP relies on a weighted graph encoding nearest neighbor relations while t-SNE uses a collection of local distributions based again on the nearest neighbor relations. In our case, we strive for a structure which captures both local neighbor properties of points and global clustering properties. In order to achieve this with a low computational cost, and at the same time keep the approach simple and parameter free, we build a hierarchy based on 1-NN relations between points. This approach is inspired by classical work on nearest neighbor graphs such as~\cite{eppstein1997nearest} and the FINCH clustering method~\cite{finch}.

Assume that our dataset is $\mathbf{X} = \{\mathbf{x}_i\}_{i\leq N}$, where $\mathbf{x}_i\in\mathbb{R}^D$. The first step in constructing the hierarchy entails building $NNG(\mathbf{X})$ which is a directed graph that connects each point to its nearest neighbor. This can be performed by using any nearest neighbor or approximate nearest neighbor algorithm. Next, we identify the connected components of $NNG(\mathbf{X})$, denoted by $\{NNG_i(\mathbf{X})\}_{i\leq g_0}$, which form directed graphs with all edges pointing to a single bi-root. For each graph $NNG_i(\mathbf{X})$, we compute its centroid $\mathbf{c}_i^{(0)} = \frac{1}{g_0} \sum\limits_{\mathbf{x} \in NNG_i(\mathbf{X})} \mathbf{x}$ and thus form a new set of points $\mathbf{C}^{(0)} = \{\mathbf{c}^{(0)}_i\}_{i\leq g_0}$. We then repeat the same process of computing $NNG(\mathbf{C}^{(0)})$ and its components' centroids to derive $\mathbf{C}^{(1)}$ and continue until we reach the smallest set of centroids $\mathbf{C}^{(l)}$ which contains at least three points. The NNG hierarchy is then the tree $T_{NNG}(\mathbf{X}) = \langle \bigcup\limits_{i\leq l}\mathbf{C}^{(i)}\cup \mathbf{X}, E \rangle$, where each centroid is connected to each of the points of the NNG component which corresponds to it. Figure \ref{fig:nng_hierarchy} displays a single step of this iterative process.

In comparison with k-NNGs for $k > 1$, 1-NNGs are quite small in size, which make them well-suited to construct a fine-grained hierarchy with several levels. A simple rule of thumb is that the number of points decreases on average by a factor of 0.31 on every step. This can be extracted from the following theorem assuming that, at least locally, the data and centroids on higher levels are uniformly distributed. Though the validity of the assumption is hard to verify, we notice that this rule holds for all observed datasets.

\begin{theorem}[Eppstein, Paterson, Yao~\cite{eppstein1997nearest}]
The expected number of components in $NNG(\mathbf{X})$ for a uniform random point set $\mathbf{X}$ in
a unit square is asymptotic to approximately $0.31 |\mathbf{X}|$.
\end{theorem}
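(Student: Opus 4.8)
The plan is to reduce the count of connected components to a count of \emph{reciprocal} (mutual) nearest-neighbor pairs, and then to evaluate the expectation of the latter by linearity of expectation together with an elementary planar-geometry computation. Since $\mathbf{X}$ is drawn from a continuous distribution, almost surely all $\binom{N}{2}$ pairwise distances are distinct, so ``nearest neighbor'' is a well-defined map $f$ and $NNG(\mathbf{X})$ is its functional digraph: every vertex has out-degree one, hence each weakly connected component contains exactly one directed cycle. First I would show every such cycle is a $2$-cycle: along a cycle $\mathbf{x}_1 \to \mathbf{x}_2 \to \cdots \to \mathbf{x}_k \to \mathbf{x}_1$ one has $d(\mathbf{x}_{j},\mathbf{x}_{j+1}) \le d(\mathbf{x}_{j-1},\mathbf{x}_{j})$ because $\mathbf{x}_{j+1}$ is the nearest neighbor of $\mathbf{x}_j$ while $\mathbf{x}_{j-1}$ is merely some other point; with distinct distances this inequality is strict unless the cycle closes immediately, which forces $k=2$. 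Consequently the number of components equals the number of unordered pairs $\{\mathbf{x}_i,\mathbf{x}_j\}$ that are each other's nearest neighbor, so $\expect[\#\text{components}] = \binom{N}{2}\, p_N$, where $p_N$ is the probability that two fixed sample points form such a pair.

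Next I would compute $p_N$. Two points $\mathbf{x}_1,\mathbf{x}_2$ at distance $r$ are mutual nearest neighbors precisely when none of the other $N-2$ points lies in $B(\mathbf{x}_1,r)\cup B(\mathbf{x}_2,r)$. For $\mathbf{x}_1$ in the interior of the square and $r$ small, this union is two radius-$r$ disks whose centers lie at distance $r$; by the standard circular-lens formula its area equals $A r^2$ with $A = \tfrac{4\pi}{3}+\tfrac{\sqrt3}{2}$. Conditioning on $\mathbf{x}_1$, writing $\mathbf{x}_2 = \mathbf{x}_1 + \mathbf{v}$, and using that the remaining points are i.i.d.\ uniform, $p_N \approx \int 2\pi\rho\,(1 - A\rho^2)^{N-2}\,d\rho$ over $0 \le \rho < A^{-1/2}$; the substitution $s = A\rho^2$ turns this into $\tfrac{\pi}{A}\int_0^1 (1-s)^{N-2}\,ds = \tfrac{\pi}{A(N-1)}$. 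Hence $\expect[\#\text{components}] \approx \binom{N}{2}\cdot\tfrac{\pi}{A(N-1)} = \tfrac{\pi N}{2A} = \tfrac{3\pi}{8\pi + 3\sqrt3}\,N \approx 0.3108\,N$, which is the claimed asymptotic.

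To upgrade this heuristic to a genuine asymptotic I would (i) bound the error in replacing $(1-A\rho^2)^{N-2}$ by $e^{-AN\rho^2}$; (ii) show that pairs at distance $\rho \gtrsim N^{-1/2}\log N$ contribute a vanishing fraction, so only an $N^{-1/2}$-scale neighborhood of $\mathbf{x}_1$ is relevant; and (iii) handle the boundary of the square, noting that points within $O(N^{-1/2}\log N)$ of $\partial[0,1]^2$ form a vanishing fraction of the sample and contribute only lower-order terms, while for the remaining interior points the two balls lie inside the square and the area $A\rho^2$ is exact. I expect step (iii), together with justifying the interchange of the $N\to\infty$ limit with the spatial integration uniformly in $\mathbf{x}_1$, to be the main technical obstacle; the combinatorial reduction of the first paragraph and the lens-area computation are routine.
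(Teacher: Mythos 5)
The paper itself gives no proof of this theorem---it is quoted from Eppstein, Paterson and Yao---so there is no internal argument to compare against; your reconstruction is, however, essentially the standard argument behind the cited result, and it is sound. The combinatorial reduction is exactly right: with almost surely distinct pairwise distances the 1-NNG is a functional digraph whose unique cycle per component must be a 2-cycle (the ``bi-roots'' the paper mentions), so components are in bijection with reciprocal nearest-neighbor pairs, and your lens-area computation with $A = \tfrac{4\pi}{3}+\tfrac{\sqrt{3}}{2}$ gives $\binom{N}{2}\cdot\tfrac{\pi}{A(N-1)} = \tfrac{3\pi}{8\pi+3\sqrt{3}}\,N \approx 0.3107\,N$, matching the quoted constant. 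The steps you defer---controlling the tail of the integral at the $N^{-1/2}$ scale and the boundary layer, whose points contribute only $O(N^{1/2}\log N)$ pairs---are standard and hide no obstruction.
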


\subsection{Preliminary linear embedding}

Though the clustering tree structure is enough to produce a projection respecting the partitions of the original dataset on separate levels, the relative positions of the points contained in the graph components $NNG_i(\mathbf{X})$ and $NNG_i(\mathbf{C}^{(k)})$ need to be determined in the target space. One could use a random projection, or even start with random points, but there is an extra gain in both preservation scores and in visual quality when using some meaningful initial projection. We choose to use PCA and to accelerate its computation, we estimate the covariance matrix of the data using the centroids $\mathbf{C}^{(i)}$ of a predefined level of the hierarchy. One could as well sub-sample the original point set $\mathbf{X}$, but we notice that using the centroids increases stability and avoids deviation of the initialization between runs. We experimentally verify that this approximation of principal components produces results comparable to using PCA on the full data. We provide this analysis in the supplementary.

We choose the level of centroids to be the lowest level of the hierarchy such that all levels above have cardinality less than 1000. This implies that if the dataset is small, thus for all $i$, $|\mathbf{C}^{(i)}| < 1000$, then the PCA will directly be computed on $\mathbf{X}$. The advantage of this approximation is that we can reduce the computational cost of PCA from $\mathcal{O}(N\cdot D^2)$ to $\mathcal{O}(D^2)$ with $N$ replaced with a factor of ~1000. 

Once the eigenvectors of PCA are computed, say in a matrix $V$, then all points of $\mathbf{X}$ and all centroids $\mathbf{C}_i$ are projected from the higher dimension $D$ to the lower dimension $d$ by multiplying with this single shared matrix. We denote the projections of such points by a tilde superscript, for example $\tilde{\mathbf{c}}_i^{(k)}$ for centroids and $\tilde{\mathbf{x}}_i$ for points of the dataset.

\setlength{\belowcaptionskip}{6pt}
\setlength{\abovecaptionskip}{0pt}

\begin{figure}[t]
\begin{center}
\includegraphics[trim={0, 2.33cm, 0, 0}, clip,width=\linewidth]{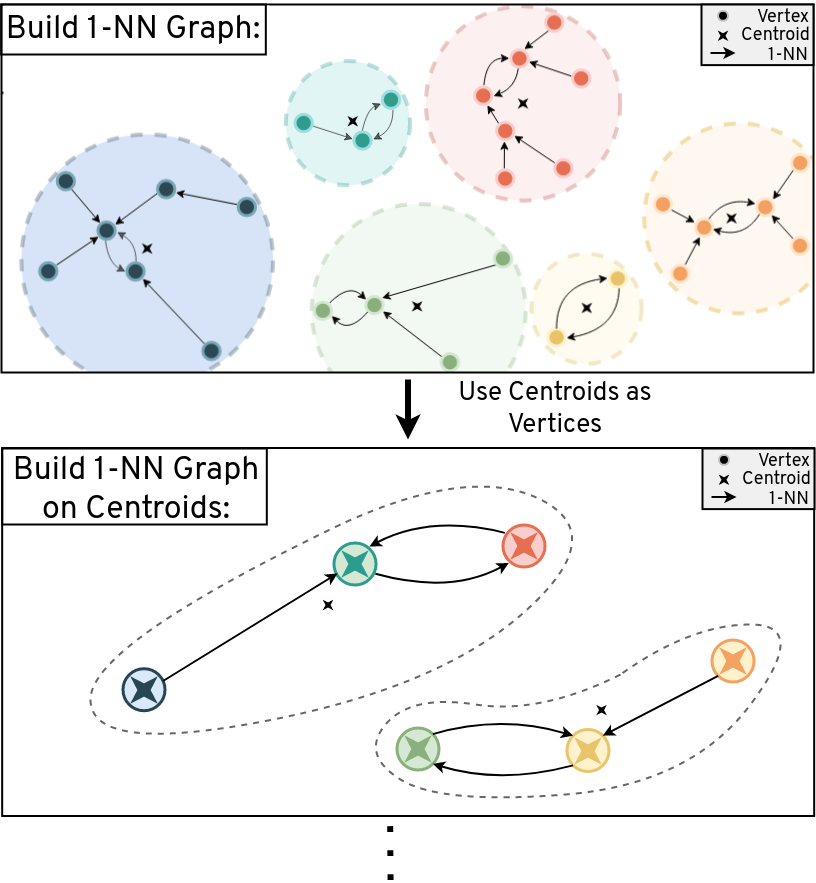}
\end{center}
\caption{
The induction step in building $T_{NNG}(\mathbf{X})$. All 1-NNG components are mapped to their centroids which form the basis for the next step.
}

\label{fig:nng_hierarchy}
\end{figure}

\subsection{Hierarchical point translation}

This is the central part of the algorithm and the goal is to move the points hierarchically so that they occupy the projection space $\mathbb{R}^d$ following the tree $T_{NNG}(\mathbf{X})$ in a way that the 1-NN relationships are preserved over all levels. Once we have a preliminary projection for all points and centroids, we start from $\mathbf{C}^{(l)}$ and consider its projected centroids $\{\tilde{\mathbf{c}}_i^{(l)}\}_{i\leq g_l}$. Those centroids form a Voronoi tessellation of $\mathbb{R}^d$ and an ideal way to place the lower level projected centroids $\{\tilde{\mathbf{c}}_i^{(l-1)}\}_{i\leq g_{l-1}}$ would be to select for each $\tilde{\mathbf{c}}_i^{(l)}$ the centroids of level $l-1$ which correspond to it, translate them so that they are centered around $\tilde{\mathbf{c}}_i^{(l)}$ and finally spread them to occupy the corresponding Voronoi cell. In order to perform this process in an efficient and easy to vectorize way, we choose to use the already known distance $d_i^{(l)}$ of each $\tilde{\mathbf{c}}_i^{(l)}$ to its nearest neighbor in $\tilde{\mathbf{C}}^{(l)}$ and then scale the translated, lower level centroids to a d-ball of radius $\frac{1}{3} d_i^{(l)}$. This distance guarantees that points belonging to neighboring centroids will not form nearest neighbor relationships cross-centroids, thus preserving the separation encoded in $T_{NNG}(\mathbf{X})$. Figure \ref{fig:voronoi_with_radius} illustrates this process.

Once the points of $\tilde{\mathbf{C}}^{(l - 1)}$ are placed around the points of $\tilde{\mathbf{C}}^{(l)}$, we use them to translate the points of $\tilde{\mathbf{C}}^{(l - 2)}$ around them, the same way as before. This step is repeated until we reach the level of $\mathbf{X}$ which forms the final projection. 

There is still one issue we need to address. This is the fact that though the radius $\frac{1}{3} d_i^{(l)}$ guarantees the separation of neighboring centroids on one step, it could be the case that the borders of this d-ball are crossed by points moved on later step of the iteration (see again figure \ref{fig:voronoi_with_radius}). Below we compute a shrinking coefficient for this radius, such that this guarantee still holds for the points moved in later steps.

\begin{lemma}\label{radius-bound}
Given a d-ball $B(\mathbf{c}^{(k)}_i, r)$ centered on a centroid, all points belonging to $\mathbf{c}^{(k)}_i$ translated with the h-NNE algorithm with radii multiplied a factor of $\frac{3}{5}$, lie inside $B(\mathbf{c}^{(k)}_i, r)$.
\end{lemma}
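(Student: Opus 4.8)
The plan is to bound, by a single geometric series, how far a data point can drift from an ancestor centroid $\mathbf{c}_i^{(k)}$ over the whole cascade of translations, and to show that this drift never exceeds $\tfrac13 d_i^{(k)}$, i.e.\ the radius $r$ that the \emph{unmodified} algorithm assigns to the ball around $\mathbf{c}_i^{(k)}$ (this is the radius appearing in the lemma). To that end I would introduce the following notation: for a centroid $\mathbf{c}$ sitting at level $m\geq 1$ of $T_{NNG}(\mathbf{X})$, let $R(\mathbf{c})$ be the radius of the $d$-ball, centered at $\mathbf{c}$, into which the algorithm with radii scaled by $\tfrac35$ places the level-$(m-1)$ children of $\mathbf{c}$; by construction $R(\mathbf{c}) = \tfrac35\cdot\tfrac13\, d(\mathbf{c}) = \tfrac15\, d(\mathbf{c})$, where $d(\mathbf{c})$ is the distance from $\mathbf{c}$ to its nearest neighbor among the level-$m$ centroids in the configuration produced by the preceding step. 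Since, once a centroid has been placed, later steps of the cascade only relocate its descendants, these positions and radii are well defined.

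The key step is a recursive estimate: if $\mathbf{c}'$ is a child of $\mathbf{c}$ in $T_{NNG}(\mathbf{X})$, then $R(\mathbf{c}') \leq \tfrac25\, R(\mathbf{c})$. Here I use the structural fact that every connected component of $NNG(\cdot)$ contains a bi-root, hence at least two vertices, so $\mathbf{c}$ has at least two children; after these children are re-centered on $\mathbf{c}$ and scaled into $B(\mathbf{c}, R(\mathbf{c}))$, any two of them are at distance at most $2R(\mathbf{c})$. Consequently the nearest-neighbor distance $d(\mathbf{c}')$ of $\mathbf{c}'$ within its own level is at most the distance from $\mathbf{c}'$ to its nearest \emph{sibling}, which is at most $2R(\mathbf{c})$; therefore $R(\mathbf{c}') = \tfrac15\, d(\mathbf{c}') \leq \tfrac25\, R(\mathbf{c})$.

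With this in hand I finish by telescoping along a root-to-leaf path. Fix a point $\mathbf{x}\in\mathbf{X}$ belonging to $\mathbf{c}_i^{(k)}$ and let $\mathbf{c}_i^{(k)} = \mathbf{a}_k \to \mathbf{a}_{k-1}\to\cdots\to\mathbf{a}_0 = \mathbf{x}$ be the chain of centroids joining them in the tree. Since $\mathbf{a}_{s-1}$ is placed inside $B(\mathbf{a}_s, R(\mathbf{a}_s))$ we have $\|\mathbf{a}_{s-1}-\mathbf{a}_s\| \leq R(\mathbf{a}_s)$, and iterating the recursive estimate gives $R(\mathbf{a}_s) \leq \left(\tfrac25\right)^{k-s} R(\mathbf{c}_i^{(k)})$. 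By the triangle inequality,
\begin{align*}
\|\mathbf{x}-\mathbf{c}_i^{(k)}\| &\;\leq\; \sum_{s=1}^{k}\|\mathbf{a}_{s-1}-\mathbf{a}_s\| \;\leq\; R(\mathbf{c}_i^{(k)})\sum_{j=0}^{k-1}\left(\tfrac25\right)^{j}\\
&\;<\; \tfrac53\,R(\mathbf{c}_i^{(k)}) \;=\; \tfrac53\cdot\tfrac15\, d_i^{(k)} \;=\; \tfrac13\, d_i^{(k)} \;=\; r ,
\end{align*}
so $\mathbf{x}\in B(\mathbf{c}_i^{(k)}, r)$, which is the claim. (The same bound with $k$ replaced by $s$ shows that the intermediate centroids $\mathbf{a}_s$ also lie in the ball.)

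The main obstacle — indeed the only delicate point — is the recursive estimate $R(\mathbf{c}')\leq\tfrac25 R(\mathbf{c})$: it relies on $\mathbf{c}$ genuinely possessing a second child, so that $\mathbf{c}'$ has a sibling inside the same ball and $d(\mathbf{c}')\leq 2R(\mathbf{c})$ holds even when the true nearest neighbor of $\mathbf{c}'$ at its level lies under a different parent, and on the nearest-neighbor distances being measured in the target-space configuration already built by the previous steps rather than in the original space. Everything else — the triangle inequality along the path and summing the geometric series of ratio $\tfrac25<1$ — is routine, and the choice of the factor $\tfrac35$ is precisely what makes $\tfrac53\cdot\tfrac15=\tfrac13$ so that the accumulated drift returns exactly to the original safe radius.
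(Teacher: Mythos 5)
Your proof is correct and follows essentially the same route as the paper's: you bound the cumulative drift along the hierarchy by a geometric series whose per-level ratio is $\tfrac{2}{5}$ (the paper's $\tfrac{2}{3}s$ with $s=\tfrac{3}{5}$), where the worst case of a child's nearest-neighbor distance being at most the ball's diameter is exactly the paper's ``antidiametrical nearest neighbors'' scenario. The only difference is that the paper keeps the shrinking factor $s$ general and solves $\tfrac{sr}{1-\frac{2}{3}s}\leq r$ to derive $s\leq\tfrac{3}{5}$, whereas you verify the specific factor $\tfrac{3}{5}$, formalizing the sibling-distance bound a bit more explicitly along the way.
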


\begin{proof}
Assume that a factor $s$ is used to reduce the computed radii on each step of h-NNE. On the first step, all lower level centroids are translated and scaled so that they lie in a d-ball of radius $sr$. The worst case scenario for the next step is that there a two antidiametrical points $\mathbf{c}^{(k)}_1$, $\mathbf{c}^{(k)}_2$ which are nearest neighbors. In that case, the points of the next step belonging to $\mathbf{c}^{(k)}_1$ will be placed inside a d-ball around it of radius $\frac{2}{3} sr$ since $d(\mathbf{c}^{(k)}_1, \mathbf{c}^{(k)}_2) = 2sr$. This means that the largest distance of any of those points to $\mathbf{c}^{(k)}_i$ is $sr + \frac{2}{3} s^2r$. By recursively computing those worst case scenarios, we get that for infinite steps of the algorithm, the most distant point will be placed in a distance of at most 

\begin{equation}
\sum\limits_{j\in \mathbf{N}} (\frac{2}{3})^j s^{j+1}r = sr \sum\limits_{j\in \mathbf{N}} (\frac{2}{3}s)^j = \frac{sr}{1 - \frac{2}{3}s}
\end{equation}

Therefore, in order for all points to lie inside the original ball $B(\mathbf{c}^{(k)}_i, r)$, we need that $\frac{sr}{1 - \frac{2}{3}s} \leq r$ from which we get that $s \leq\frac{3}{5}$.
\end{proof}

The above bound guarantees that if we place new points in a ball of radius $\frac{3}{5}\cdot\frac{1}{3} = 0.2$ times the distance to the nearest neighbor, then the nearest neighbors of points will be restricted in the clusters formed in the NNG hierarchy. In practice the worst case scenario of anti-diametrical points does not occur so often. In low dimensions where points are more dense one can use radii of $\frac{1}{3}$ of the 1-NN distance or even more without noticeable drop in k-NN preservation. This can be particularly useful for visualization, as it can help make plots more spread on the plane or 3D space.

\setlength{\abovecaptionskip}{8pt}

\begin{figure}[t]
\centering
\includegraphics[width=\linewidth, trim={0, 0, 0, 1mm}, clip]{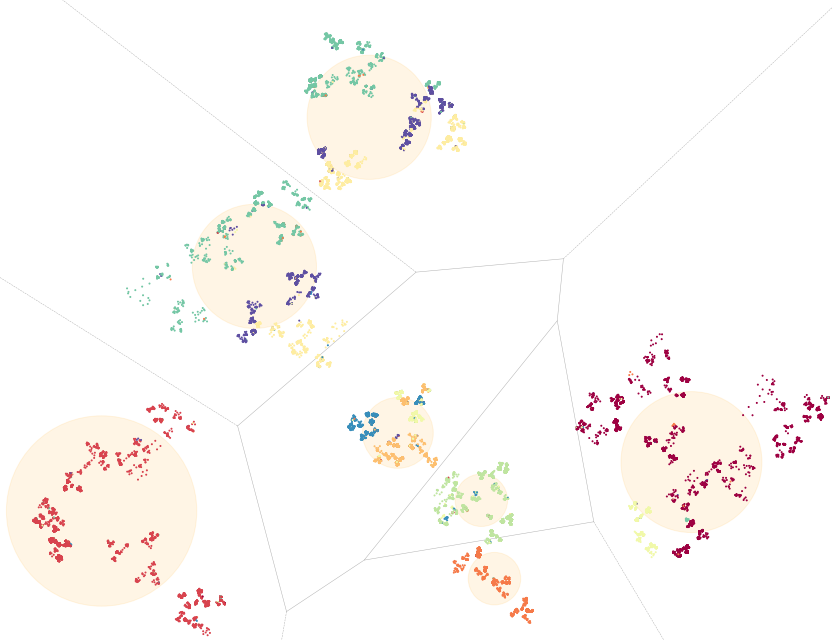}
\caption{
Voronoi cells of the top level centroids in MNIST. The circles have radius $\frac{1}{3}$ the distance from the nearest neighbor and the points are projected with a shrinking factor of exactly this factor of $\frac{1}{3}$. One can notice that the final points cross the boundaries of the circle. Nevertheless, the density of the data in 2D result to a situation where no severe overlaps appear.
}
\label{fig:voronoi_with_radius}
\end{figure}

\textbf{Point cluster inflation for visualization purposes.} The use of a single linear projection for all points can result in cluttered point clusters when they are not well aligned to the global principal components used to project. Though this has minimal impact to performance, it leads to poorer visualization which contains artifacts from this initial projection. In order to enhance the shape of images without sacrificing speed and without adding new hyper-parameters, we add the option to inflate potentially squeezed point clusters using six local rotations with angles equally distanced in the interval $[0, \frac{\pi}{2}]$, followed by a scaling and the inverse rotation. This results in an output almost equivalent to that of rotating the clouds to their PCA principal components, scaling them, and then rotating them back to the original orientation but much less computationally expensive.

\setlength{\belowcaptionskip}{-6pt}
\setlength{\abovecaptionskip}{-1pt}

\begin{table*}[t]
\begin{center}
\resizebox{17cm}{!} {
\begin{tabular}{l |c|c|cccc|cc|cc}
\toprule
  					& Higgs~\cite{baldi2014searching} & Google News~\cite{mikolov2013distributed} & COIL 20~\cite{nene1996columbia} & CIFAR-10~\cite{cifar} & F-MNIST~\cite{f_mnist} & ImageNet~\cite{imagenet} & BBT~\cite{bbt} & Buffy~\cite{bbt}   &  \multicolumn{2}{c}{MNIST~\cite{mnist, mnist8m}} \\
\midrule
Type  				& Sensor  &	Text& \multicolumn{4}{c|}{Objects} &	\multicolumn{2}{c|}{Videos} &\multicolumn{2}{c}{Digits}  \\
\#Classes  			    & 2&	-&	20&	10&	10 & 1000 & 5 & 6 & \multicolumn{2}{c}{10}  \\
\#Samples  			& 11M&	3M& 1440& 60K& 70K& 1.2M&	200K & 206K & 	70k & 8M  \\
Dimension  			& 28 &	300 & 16384 & 3072 & 784& 2048& 2048&	2048&   784 & 784   \\

Features  & Measurements & Word2Vec &Pixels &Pixels & Pixels & ResNet50 & ResNet50 &ResNet50 & Pixels & Pixels \\
\bottomrule
\end{tabular}
}
\end{center}
\caption{\small{Datasets of different size ranging from 1400 to 11 million samples in 28 to 16384 dimensions used in experiments.}}\label{tab:datasets}
\end{table*}
\setlength{\abovecaptionskip}{6pt}

\begin{table}[b]
\centering
\resizebox{\linewidth}{!}{ 

\begin{tabular}{@{}lccccc@{}}
\toprule
 & h-NNE & t-SNE & fIt-SNE & UMAP & PaCMAP \\
\midrule
COIL20 & 0.994 & 0.998 & \textbf{0.998} & 0.996 & 0.985 \\
MNIST & 0.983 & 0.989 & \textbf{0.991} & 0.958 & 0.950 \\
F-MNIST & 0.981 & 0.991 & \textbf{0.992} & 0.977 & 0.966 \\
CIFAR10 & 0.907 & 0.927 & \textbf{0.932} & 0.829 & 0.818 \\
BBT & 0.982 & 0.99 & \textbf{0.99} & 0.966 & 0.958 \\
Buffy & 0.976 & 0.988 & \textbf{0.99} & 0.954 & 0.952 \\
ImageNet & 0.928 & $-$ & \textbf{0.956} & 0.895 & 0.822 \\
HIGGS & 0.849 & $-$ & \textbf{0.979} & 0.909 & 0.899 \\
MNIST 8M & \textbf{0.967} & $-$ & 0.956 & $-$ & 0.945 \\

\bottomrule
\end{tabular}

} 
\caption{
Local structure preservation : Trustworthiness.
} 
\label{tab:sota}
\end{table}

\textbf{Projecting new points.} 
The projection of new points can be performed by repeating the same algorithm as before, just for the individual points and by descending only the second level of the hierarchy. If $\mathbf{x}$ is the new point, we first identify the closest centroid in $\mathbf{C}^{(1)}$ using our ANN algorithm of choice. Then the point is transformed by scaling, applying the pre-computed PCA and normalizing the position of the point relative to the centroid based on the corresponding radius. If point cluster inflation was used in the original projection, then the relevant rotations and scalings are also performed before the final normalization step.

\subsection{Computational complexity}
\label{subsec:method_complexity}
There are three steps in h-NNE which make up its complexity: the construction of the $T_{NNG}(\mathbf{X})$ tree, the preliminary PCA projection and the hierarchical point translation. Since each component of the NNG graph contains at least two points, the height of $T_{NNG}(\mathbf{X})$ is $\mathcal{O}(\log N)$. That implies that given that the computation of approximate 1-NN is at least $\mathcal{O}(N)$, no matter which method is used, the total complexity is equal to the complexity of a single Approximate Nearest Neighbor (ANN) step including any preparation on the dataset (e.g. building indexing trees) and querying once on all points of the dataset. We denote the ANN complexity with $\mathcal{O}(ANN(N, D))$. The PCA projection step requires $\mathcal{O}(Dd^2)$, given we have fixed the number of used samples to a constant number of points. Finally, the point translation steps are $\mathcal{O}(N \log N d + ANN(N, d))$, again because of the logarithmic height of the tree, the we use group by operations, and the fact we need to compute nearest neighbors to find the radius of the d-balls where clusters are expanded. Thus, overall, the algorithm has $\mathcal{O}(ANN(N, D) + Dd^2 + N \log N d)$ complexity. 

The complexity $ANN(N)$ is not easy to compute. In the worst case scenario, it could be $N^2$ when using linear search. Some of the older exact methods~\cite{Omohundro89fiveballtree, kdtreepaper} achieve $N \log N$ complexity, but unfortunately scale exponentially on $D$. Approximate methods such as HNSW~\cite{Malkov2020EfficientAR}, NNDescent~\cite{nndescentpaper} and ScaNN~\cite{pmlr-v119-guo20h} achieve good performance on real-world datasets but support it with experimental evidence and no complexity bounds. In our current implementation we use PyNNDescent~\cite{pynndescent-repo} which is a tuned version of NNDescent. The empirical complexity of NNDescent is approximately $O(N^{1.14})$ for datasets with small intrinsic dimensionality. Finally, an interesting analysis~\cite{Baron2019KNearestNA} provides synthetic families of datasets where NNDescent attains a complexity of $O(N^2)$ and $O(N \log N)$ respectively. Further work of the authors~\cite{empirical-complexity-nndescent-baron}, suggests an empirical complexity of $K^2 N\log N$, where $K$ is the number of neighbors.

\section{Experiments}

We demonstrate h-NNE on diverse datasets of different size which cover domains such as sensor data, text, digits, videos and objects. We first introduce the datasets and performance metrics, followed by a thorough comparison of the proposed method to current state-of-the-art algorithms.
\subsection{Evaluation}
\textbf{Datasets.} 
The datasets are summarized in Table~\ref{tab:datasets}. Apart from some commonly used datasets we also include few to test against size and dimensions.  Altogether, we use 9 datasets ranging from 1440 to 11 million samples in 28 to 16384 dimensions. We provide more details for each dataset in the supplementary.

\textbf{Metrics.} 
A well studied issue in dimensionality reduction is how to balance local and global structure preservation~\cite{de2002global,kobak2021initialization}. Methods like t-SNE are well known for their local structure preservation. More recent methods such as UMAP~\cite{umap, pumap} and PaCMAP~\cite{pacmap} strive to preserve both local and global structure. 
We therefore evaluate all methods on the considered datasets in these two aspects.

\begin{table}[b]
\centering
\resizebox{\linewidth}{!}{ 

\begin{tabular}{@{}lccccc@{}}
\toprule
 & h-NNE & t-SNE & fIt-SNE & UMAP & PaCMAP \\
\midrule
COIL20 & \textbf{0.799} & 0.778 & 0.769 & 0.705 & 0.758 \\
MNIST & 0.671 & 0.711 & 0.748 & \textbf{0.804} & 0.713 \\
F-MNIST & \textbf{0.925} & 0.784 & 0.848 & 0.848 & 0.866 \\
CIFAR10 & 0.911 & 0.921 & 0.929 & 0.927 & \textbf{0.932} \\
BBT & 0.644 & 0.600 & \textbf{0.711} & 0.556 & 0.666 \\
Buffy & \textbf{0.857} & 0.571 & 0.543 & 0.657 & 0.704 \\
ImageNet & 0.604 & $-$ & 0.639 & 0.605 & \textbf{0.646} \\
MNIST 8M & \textbf{0.774} & $-$ & 0.744 & $-$ & 0.735 \\

\bottomrule
\end{tabular}      
 
} 
\caption{
Global structure preservation: Centroid Triplet Accuracy.
} 
\label{tab:cta}
\end{table}

\noindent\textbf{Local structure preservation:} commonly measured by leave-one-out cross validation using the \textit{k-NN classifier}. \textbf{k-NN accuracy} is a standard evaluation method for dimension reduction methods as this measures if the classification accuracy based on neighborhoods would remain close to that in the original space.
As in previous studies, we use a 10-fold stratified cross-validation to measure k-NN accuracy on varying number of k values. Another closely related local structure preservation metric is Trustworthiness. \textbf{Trustworthiness}~\cite{venna2006local} penalizes for each point every one of its k nearest neighbours in the embedding space by the amount its rank in the original space exceeds k. Trustworthiness is defined as 
\begin{equation*}
T(k) = 1 - \frac{2}{nk (2n - 3k - 1)} \sum^n_{i=1}\sum_{j \in\mathcal{N}_{i}^{k}} \max(0, (r(i, j) - k))
\end{equation*}    
 and is scaled between 0 and 1, with 1 being more trustworthy. We use scikit-learn's implementation with $k=5$.

\setlength{\abovecaptionskip}{0pt}
\begin{figure*}[t]
\begin{center}
\includegraphics[width=\linewidth]{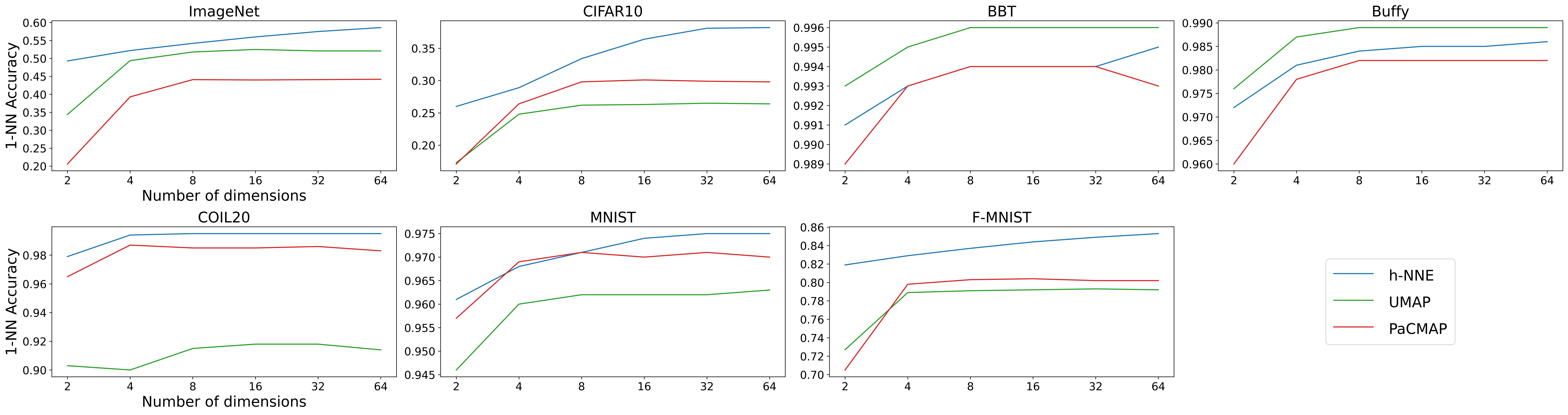}
\end{center}
\caption{
Projection on different dimensions: KNN accuracy for 2, 4, 8, 32 and 64 dimensions}
\label{fig:dim_ablation}
\end{figure*}

\noindent\textbf{Global structure preservation:} To measure the preservation of global structure i.e. relative positioning of individual neighborhoods Wang~\etal~\cite{pacmap} proposed a metric that measures how well the distribution of distances between class centers in the original space are preserved in the embedding space. It is obtained by  computing the centroids and forming all possible triplets between them. The metric \textbf{Triplet Centroid Accuracy} measures the percentage of triplets whose relative distance in the high- and low-dimensional spaces maintain their relative order.  

\subsection{Comparison with state-of-the-art}
We compare with representative current best methods for unsupervised dimensionality reduction and visualization. t-SNE~\cite{tsne}, owing to its remarkable local structure preserving properties and long standing follow up work, is a default choice for 2D visualizations. It has, however, high compute overhead. We use its optimized version Barnes-Hut t-SNE~\cite{bh_tsne} in our comparison. Since this still does not easily scale to data beyond 200K in size, we also include a much faster and scalable variant FFt-accelerated Interpolation-based t-SNE (FIt-SNE)~\cite{fitSNE}. FIt-SNE is the currently best t-SNE method in terms of speed and performance. Of note,  t-SNE is limited in that its running time increases exponentially to the number of dimensions and though FIt-SNE could in principal support higher dimensions it's official implementations do not support it.  Among other recent methods, we include the popular UMAP~\cite{umap} and a very recent method, PaCMAP~\cite{pacmap}, which builds upon the strengths of t-SNE and UMAP.

Table~\ref{tab:sota} provides the trustworthiness scores of all methods on the datasets. t-SNE and FIt-SNE both are quite good with respect to preserving the local structure. h-NNE follows closely to t-SNE methods in terms of the local structure preservation. UMAP and PaCMAP perform on par though slightly worse than h-NNE.
We provide the k-NN classifier accuracy and comparisons in the supplementary. 

On the global structure preservation metric as shown in Table~\ref{tab:cta} the centroid triplet accuracy for h-NNE embeddings is on par as well across all datasets. Since these metrics require ground truth labels of the datasets the results on Google News dataset cannot be computed. Similarly, the centroid triplet metric on HIGGS is not evaluated as it has only two classes.  These results indicate that, overall, h-NNE is highly competitive with these current methods in terms of both local and global structure preservation.
\setlength{\belowcaptionskip}{-0pt}

\begin{table*}[t]
\small
\centering
\resizebox{15cm}{!}{ 
\begin{tabular}{l | l c| cccccc}
\toprule
 Dataset & \#S & Dim & h-NNE (ours) & t-SNE & UMAP & PaCMAP & FIt-SNE \\
\midrule

COIL20 &1440 &16384& \textbf{00:01} & 00:08 & 00:18 & 00:02 & 00:07 \\
MNIST & 70K & 784& \textbf{00:09} & 05:28 & 00:54 & 00:37 & 01:24 \\
F-MNIST &70K &784 & \textbf{00:07} & 05:37 & 00:58 & 00:38 & 01:20 \\
CIFAR10 &60K &3072& \textbf{00:14} & 08:35 & 01:02 & 00:39 & 02:24 \\
BBT &200K & 2048 & \textbf{00:58} & 57:32 & 04:08 & 01:38 & 04:50 \\
Buffy &206K & 2048 & \textbf{00:50} & \color{red}{01:01:00} & 04:18 & 01:42 & 05:07 \\
Google News &3M & 300 & \textbf{03:34} & $-$ & \color{red}{01:19:05} & \color{red}{01:14:04} & 52:16 \\
ImageNet &1.2M & 2048 & \textbf{05:11} & $-$ & 49:10 & 30:20 & 39:29 \\
HIGGS &11M & 28 & \textbf{12:25} & $-$ & \color{red}{03:53:04} & \color{red}{05:07:34} & \color{red}{02:49:28} \\
MNIST\_8M & 8M & 784 & \textbf{21:52} & $-$ & $-$ & \color{red}{03:08:47} & \color{red}{02:56:14} \\
\midrule
Framework  		& 		&   	&Python	&Python	&Python	&Python &C++ \\
\bottomrule
\end{tabular}       
} 

\vspace{0.3cm}
\caption{
\small{Run-time comparison of h-NNE: We report the run time in {\color{red}{HH:MM:SS}} and MM:SS. $-$ denotes out of memory. }
} 
\label{tab:time_comparison}
\end{table*}

\begin{figure*}
\begin{center}
\includegraphics[width=\linewidth]{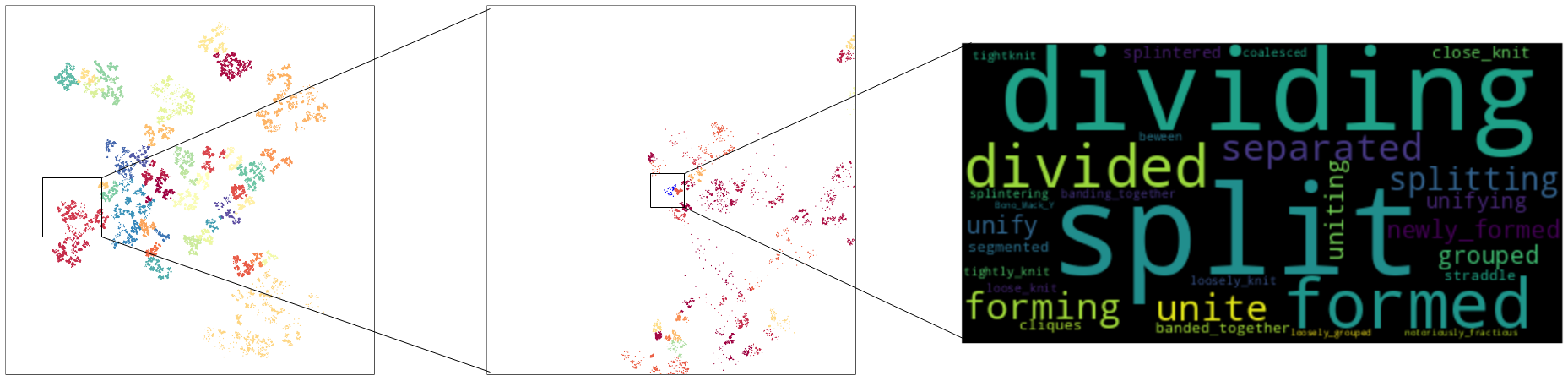}
\end{center}
\caption{
The h-NNE visualization of the unlabelled Google News embeddings. On the left, we see the cluster labels on the top level of the hierarchy. In the middle we zoom into one location and isolate a small cluster using cluster labels from an earlier level of the hierarchy. 
}
\label{fig:google-news-zoom}
\end{figure*}

\textbf{Projections of varying dimensions.}
There is no restriction on the target dimension for h-NNE. Figure~\ref{fig:dim_ablation} demonstrates projection on different dimensions. For this experiment we include 7 datasets and compare with UMAP and PaCMAP as only these support higher dimensions. Here k-NN ($k=1$) accuracy across different datasets shows that h-NNE has a consistent upward trend in performance when moving to higher dimensions.

\textbf{Computational performance comparisons.}
Benchmarks on all datasets were performed on a workstation with an AMD Ryzen Threadripper 2990X 32-core processor with 128 GB RAM. Table~\ref{tab:time_comparison} provides the total run-time of each algorithm in comparison. As expected t-SNE was not able to scale on large data size and UMAP ran out of memory on 8 million 784 dimensional data. PaCMAP and FIt-SNE - the fast alternative approach to t-SNE which is a highly optimized and parallelized C++ implementation - both ran on all datasets. h-NNE has a clear advantage in terms of speed and it scales really well on the size of the dataset, reaching speedups well above ten-fold on the larger datasets. The time efficiency of our method comes from the approach itself (i.e. not relying on expensive gradient decent-based embedding optimizations) and could be sped-up further with similar parallelized implementations. On the HIGGS data with 11M samples h-NNE takes $\sim$ $13$ minutes vs almost $3$ hours for FIt-SNE and $4 - 5$ hours for UMAP and PaCMAP. Since all these existing methods need to build weighted k-NN graphs with the requirement to store and access pairwise distance floats for their optimizations, their memory requirement is very high. For instance as opposed to h-NNE none of these methods could run on a machine with 64 GB ram at million scale. For more details on the computational complexity we refer to section~\ref{subsec:method_complexity}.

\textbf{Exploring unlabeled data.}
We see an advantage of h-NNE in exploring unlabeled data because of its hierarchical nature. On the one hand, different clusters of the data are visually separated, and on the other hand, we have cluster-labels generated on all levels of the hierarchy. In figure~\ref{fig:google-news-zoom} we color the Google News dataset projection with those cluster-labels on two different levels and demonstrate a zooming process where we isolate a cluster of similar word vectors.

\textbf{Limitations.} %
One limitation of our algorithm is its dependence on the hierarchical NNG structure we build. This structure clusters the data on each level and the final projection is based on those clusterings. This means that any error that might occur in those clusterings is directly translated to reduced quality of the global structure and localization of the points. On the bright side, those classification errors can reflect poor separation of classes in the original space, which is a desired property when one is using h-NNE to inspect the quality of features generated for datasets.

A second limitation is that our method does not preserve by design the topology of the original space. The fact that we use 1-NNs to build our hierarchy implies that any shape homeomorphic to say $S^1$ will only be preserved if it is contained in a single 1-NN graph component on the lowest level of the tree and if the PCA projection is preserving its circular form. As the size of 1-NNG components is usually quite small this will seldom be the case. In order to remove this limitation one would need to preserve more local properties of the data. This would be possible on higher dimensions, but very low dimensions, like 2 or 3, it conflicts with the preservation of the hierarchical NNG data partitioning which is capturing a more global structure. Therefore, in this case we see this limitation more as a trade-off.

\section{Conclusions}
\vspace{-0.1cm}
We have presented an efficient dimensionality reduction algorithm which utilizes nearest neighbor graph and its hierarchical groupings of points. Using the hierarchy tree nodes as anchor positions in a preliminary projected space, we device a fast mechanism to move the points directly in the embedding space such that they preserve their local neighborhoods. The embedding process is optimization-free and does not rely on specifying hyperparameters. This results in a demonstrably fast and scalable technique which is shown to preserve both the global and local structure of the data well. The major benefit that our method offers, in addition to its speed and quality, is its ability to expose the clustering structure of the data both in the original and the target space. This may enable analyzing data at different levels of its hierarchical groupings. Faster run-time and the ability to provide clustering labels could be particularly useful for visualizing the large-scale unlabelled data. 

We expect that our work will enable the analysis of large scale data under reasonable time consumption and hope to trigger interest in the hierarchy-based study of global structural properties of the datasets.




 {
     \small
     \bibliographystyle{ieee_fullname}
     \bibliography{macros,main}
 }
\appendix

\setcounter{page}{1}

\twocolumn[
\centering
\Large
\textbf{Hierarchical Nearest Neighbor Graph Embedding for Efficient Dimensionality Reduction} \\
\vspace{0.5em}Supplementary Material \\
\vspace{1.0em}
] 
\appendix

\section{Datasets description}
We used $9$ datasets ranging from $1440$ to $11$ million samples in $28$ to $16384$ dimensions. Table 1 in the main paper provides a good overview. Here we include more details for each dataset:

\texttt{Higgs}~\cite{baldi2014searching}: Higgs bosons montecarlo simulations of kinematic properties measured by the particle detectors in the accelerator. The 
\texttt{Higgs} dataset has 11 million samples in 28 dimensions.

\texttt{Google News}~\cite{mikolov2013distributed}:  is a dataset of 3 million words and phrases
derived from a sample of Google News documents and embedded into a 300
dimensional space via word2vec. It is an unlabelled dataset and therefore we can not compute metrics on it. 

\texttt{COIL 20}~\cite{nene1996columbia}: is a set of 1440 greyscale images consisting of 20 objects under 72 different rotations spanning 360 degrees. Each image has size 128x128 pixels and is treated as a single 16384 dimensional vector for the purposes of computing distance between images.

\texttt{CIFAR-10}~\cite{cifar}: $32\times32$ pixels RGB images of 10 object classes. We treat each image as a $32\times32\times3 = 3072$ dimensional pixel vector.

\texttt{Fashion MNIST}~\cite{f_mnist}: or F-MNIST is a dataset of 28x28 pixel grayscale images of fashion items (clothing, footwear and bags). There are 10 classes and 70000 images in total. Each image is treated as a ($28\times28 = 784$) dimensional pixel vector.

\texttt{ImageNet}~\cite{imagenet}: The ILSVRC2012 ImageNet dataset. $1.2$ million images belonging to $1000$ classes. Each image is represented by a $2048$ dimensional feature vector extracted using a trained ResNet-50 network. 

\texttt{BBT}~(season 1, episodes 1 to 6) and \texttt{Buffy}~(season 5, episodes 1 to 6) are challenging video face identification/clustering datasets. They are generated based on the videos of the sitcoms 
\textit{The Big Bang Theory} and 
\textit{Buffy the Vampire Slayer} on small cast lists, for BBT: 5 main casts, and for Buffy: 6 main casts. The data comprises of detected faces in video frames represented by a trained (on the VGG-Face dataset) ResNet-50 model. BBT has a total of $199346$ frames and Buffy has $206254$ frames. The extracted ResNet-50 feature vectors are $2048$ dimensional. The data for BBT and Buffy are provided by~\cite{bbt}. Since the feature vectors are obtained from a trained CNN model on face dataset, one should expect to see $5$ main clusters (for BBT) and $6$ clusters (for Buffy) in the embedding space.

\texttt{MNIST \& MNIST-8M}~\cite{mnist, mnist8m}: is a dataset of 28x28 pixel grayscale images of handwritten digits. There are 10 digit classes (0 through 9).  We use two variants of MNIST. \texttt{MNIST} $70000$ (train + test) images  and \texttt{MNIST-8M}~\cite{mnist8m} $8.1$ million total images obtained by applying random transformations to each MNIST image. Each image is treated as a pixel concatenated $784$ dimensional vector.

\section{Projecting new points}

As mentioned at the end of section 3.3 of the main paper, we can easily project new points by following the original algorithm. Here we perform a comparison with  two of the other methods which also provide the option to project new points, namely FIt-SNE and UMAP. To create a realistic scenario, we create an embedding of the training part of the ImageNet dataset and based on the structure learned on it we project the test part of ImageNet. In Table~\ref{tab:new-points-performance}, we can see the performance of different projections of ImageNet validation set and the corresponding time required to project the 50000 vectors of dimension 2048.

\begin{table}[t]
\centering
\resizebox{\linewidth}{!}{ 

\begin{tabular}{@{}lccccccccccc@{}}
\toprule
& 1-NN ACC & Trustworthiness & CTA & Runtime\\
\midrule
UMAP & 0.238 & 0.752 & 0.595 & 1min 47s \\
FIt-SNE & 0.526 & 0.933 & 0.637 & 1min 8s \\
h-NNE (ours) &	0.518 & 0.937 & 0.618 &  19s \\

\bottomrule
\end{tabular}      
 
} 
\caption{
New points projection: Performance and time comparison of new points projection using ImageNet validation set (50K samples) as new points and h-NNE, UMAP and FIt-SNE built on Imagenet Train set.
} 
\label{tab:new-points-performance}
\end{table}

As we can see, both the performance and speed of our method is preserved on the new points projection.

\section{Preliminary projection: random initialization versus PCA}
As described in section 3.1 of the main paper, We initialize with a preliminary projection using PCA. To reduce the computational complexity of PCA we proposed to use PCA on a reduced number of samples by using the centroids obtained on a predefined level of the h-NN graph. Here we provide further analysis and comparison on this initialization. We include an ablation using 6 medium scale datasets (up till 1 million samples). We show the impact on performance using 4 initialization methods: 

\texttt{1. Random init}: We start with random uniformly distributed d-dimensional points.

\texttt{2. Random Projection init}: We project the original data in $\mathbb{R}^D$ to $\mathbb{R}^d$ with d random uniform vectors of D-dimension each.  

 For random projections initialization we compute results over 5 runs and report the average.

\texttt{3. Full PCA init}: We use PCA on the full data to obtain the preliminary projection.

\texttt{4. PCA on centroids init}: proposed initialization used in the paper: We use PCA on the $\sim 1000$ points/centroids of the full data from our built 1-NN hierarchy graph to obtain a faster preliminary projection.

In Table~\ref{tab:init} we show a comparison on both local (Trustworthiness and KNN) and global (Centroid Triplet Accuracy (CTA)) structure preservation metrics. As seen the random initialization provides similar local structure preservation as the more time consuming PCA projection. However the random projections can not recover the global structure well in comparison (lower CTA scores). This is also depicted in a visual comparison of projections in Figure~\ref{fig:rand_init}. Figure~\ref{fig:rand_init} shows our projections on the \texttt{BBT} dataset which has 5 classes. Both PCA and faster PCA on Centroids has very similar outputs whereas the projection based on random inits. shows noisier global structure (splitting the same class).

\begin{table}
\centering
\resizebox{\linewidth}{!}{ 

\begin{tabular}{@{}lccc@{}}
\toprule
 & 1-NN ACC & Trustworth. & CTA \\
\midrule
\textbf{COIL20} \\
\midrule
Random & 0.988 & 0.988 & 0.577  \\
Random Proj &0.991 & 0.992 & 0.666 \\
PCA-full & 0.989 & 0.993 & 0.799  \\
PCA-centroids & 0.990 &  0.994 & 0.799  \\

\midrule
\textbf{MNIST} \\
\midrule
Random & 0.946 & 0.970 & 0.630 \\
Random Proj & 0.960 & 0.982 & 0.715  \\
PCA-full & 0.962 & 0.984 & 0.752  \\
PCA-centroids & 0.965 &0.983 & 0.671  \\

\midrule
\textbf{Fashion MNIST} \\ 
\midrule
Random & 0.782 & 0.926 & 0.564  \\
Random Proj &0.820 & 0.955 & 0.688  \\
PCA-full & 0.823 & 0.976 & 0.896  \\
PCA-centroids & 0.826 & 0.981 & 0.925  \\

\midrule
\textbf{BBT} \\
\midrule
Random & 0.985 & 0.946 & 0.529  \\
Random Proj &0.990 & 0.971 & 0.618   \\
PCA-full & 0.992 & 0.974 & 0.703   \\
PCA-centroids & 0.992 & 0.982 & 0.644   \\

\midrule
\textbf{Buffy} \\
\midrule
Random & 0.967 & 0.951 & 0.621  \\
Random Proj & 0.976 & 0.968 & 0.516   \\
PCA-full &0.982 & 0.975 & 0.867  \\
PCA-centroids & 0.975 & 0.976 & 0.857  \\

\midrule
\textbf{ImageNet} \\
\midrule
Random & 0.436 & 0.857 & 0.589  \\
Random Proj & 0.560 & 0.931 & 0.624   \\
PCA-full & 0.567 & 0.933 & 0.654   \\
PCA-centroids & 0.557 & 0.928 & 0.604   \\
\bottomrule
\end{tabular} 
   
} 
\caption{
Ablation: Impact of preliminary projection.
} 
\label{tab:init}
\end{table}

\section{Impact of point cluster inflation for visualization}
In the main paper near the end of section 3.3,  we described the use of a single linear projection for all points can result to
stretched point clusters when they are not well aligned to the global principal components. We add an option to inflate potentially squeezed point clusters using six local rotations with  equally distanced angles in the interval $[0, \frac{\pi}{2}]$, followed by a scaling and the inverse rotation. This has no impact on the performance and only makes visualization more appealing. Here in Figure~\ref{fig:supp_inflation} we show a visual comparison on the \texttt{Fashion MNIST} dataset as an example.  

\section{KNN Accuracy Comparison}
Figure~\ref{fig:k_ablation} shows the KNN performance of methods on varying number of K on all datasets. As seen on all datasets h-NNE performs on par with the other methods on the whole range of k-neighbour values. 

\setlength{\belowcaptionskip}{0pt}
\setlength{\abovecaptionskip}{0pt}

\begin{figure*}[t]
\begin{center}
\includegraphics[width=\linewidth]{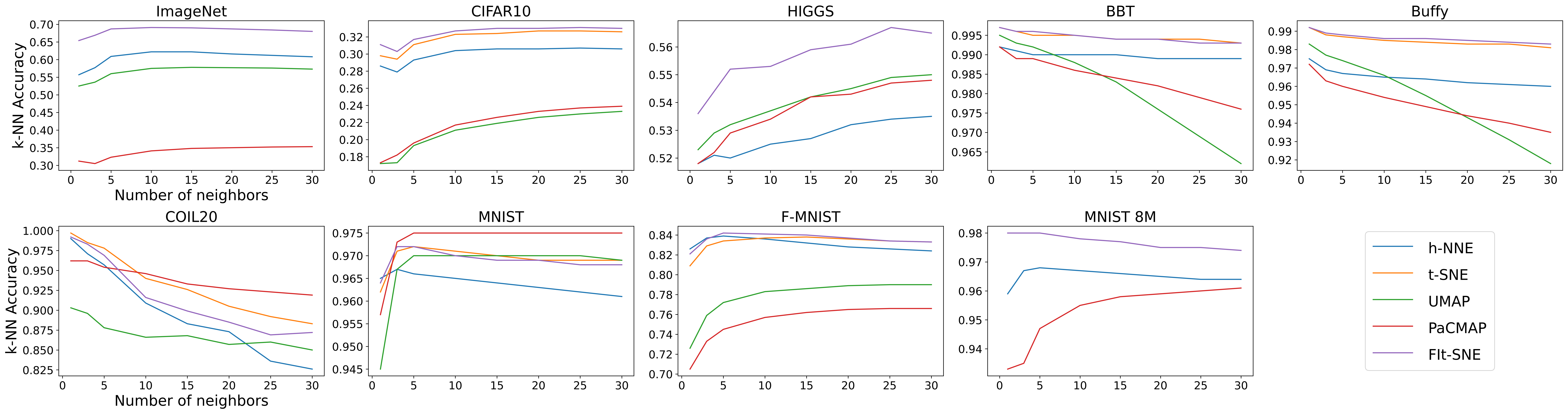}
\end{center}
\caption{
Local structure preservation: k-NN classifier accuracy on different datasets with increasing k neighbours.
}
\label{fig:k_ablation}
\end{figure*}

\begin{table}[t!]
\centering

\resizebox{\linewidth}{!}
{
\begin{tabular}{c|c||c|c|c|c}
\toprule
data / GT clusters& Original-dim & FIt-SNE[18] & UMAP[20] & PaCMAP[31] & {\color{blue}{h-NNE}} \\
\midrule
 BBT / k=5& 89.7 & 55.3 & 54.2& 61.4 &87.6 \\
 \midrule
 Buffy / k=6& 76.1  &53.2 & 48.1& 41.5 & 74.7 \\
 \midrule
 ImageNet / k=1000& 74.3 &71.8 & 70.7& 63.9 & 70.9 \\
 \midrule
 MNIST8M / k=10& 61.3 & 55.5 & - & 57.7& 59.6 \\
\bottomrule
\end{tabular}}
\caption{NMI scores of k-means clustering}
\label{tab:nmi_scores} 
\end{table}

\section{Clustering Properties}
The method is built on the principle of grouping data points together in a hierarchical way which captures clustering properties.
To demonstrate, we cluster the large scale datasets before and after the projection with k-means and compare to their groundtruth clusters. Table~\ref{tab:nmi_scores} shows the NMI scores of clustering in the original high-dim feature space (Original-dim) and in the 2-dim projection space of different methods. As seen, in comparison h-NNE maintains a high NMI score that shows its ability to preserve the clusters better.

\renewcommand{\arraystretch}{0.65}
\setlength{\tabcolsep}{2pt}

\begin{figure*}[b]
    \centering
    \includegraphics[width=\linewidth]{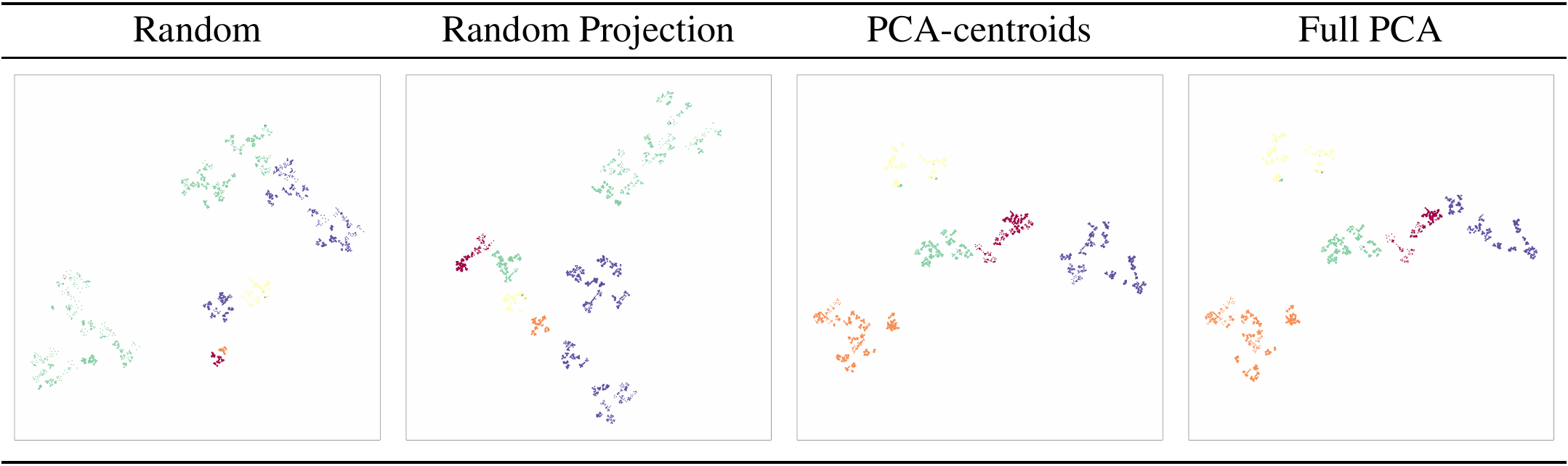}
   
 
 
    \caption{Impact of preliminary projection - Random initialization versus PCA on the BBT dataset } 
    \label{fig:rand_init}
\end{figure*}
\renewcommand{\arraystretch}{0.65}
\setlength{\tabcolsep}{2pt}

\begin{figure*}[b]
    \centering
    \includegraphics[width=\linewidth]{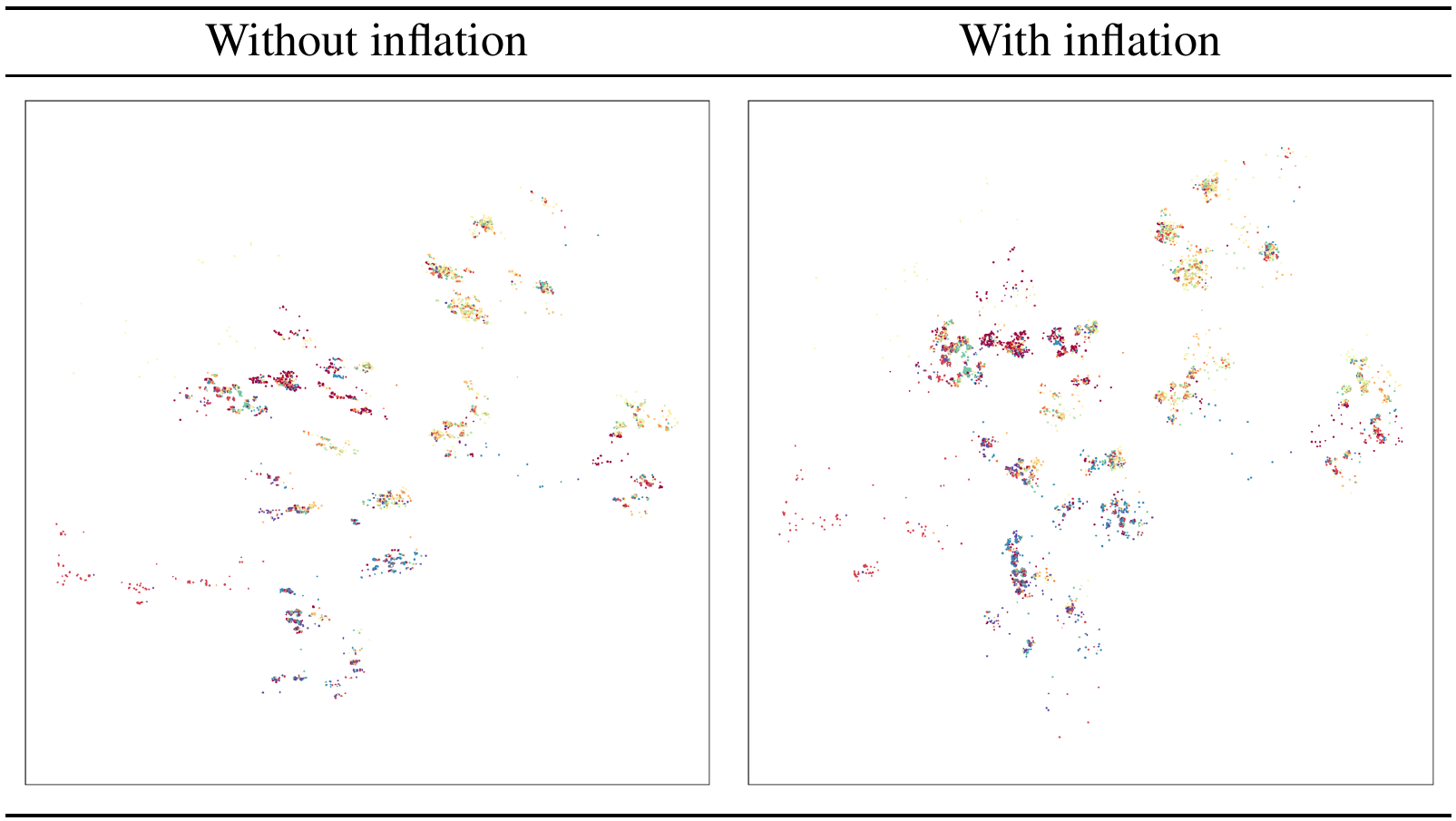}
   

 
    \caption{Impact of Point Cluster Inflation for visualization purposes on the Cifar10 dataset}
    \label{fig:supp_inflation}
\end{figure*}

\section{Visual comparison}
We end our supplementary with a visual comparison of the embeddings in two dimensions for a sample of real-world datasets which can be viewed in figure~\ref{fig:visual-comparison}. In each plot the colors are based on the labels of each dataset with the exception of Google News for which no labels are available. Also note that the Google News plot is missing for the case of t-SNE due to it not finishing the projection of 3 million points in a reasonable time interval.

\renewcommand{\arraystretch}{0.65}
\setlength{\tabcolsep}{2pt}

\begin{figure*}
    \centering
    \includegraphics[width=\linewidth]{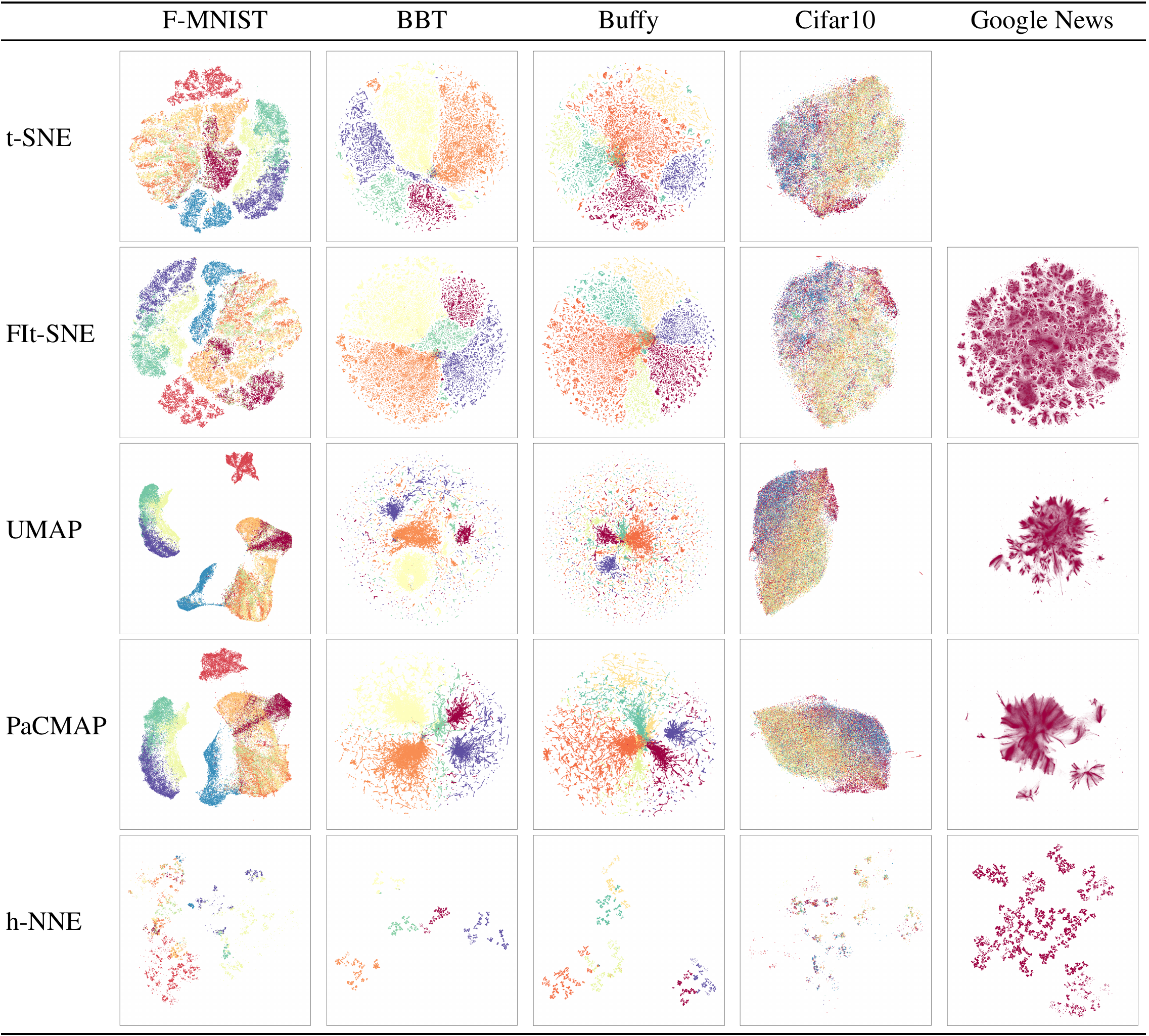}
    \caption{Visual comparison between h-NNE, t-SNE, FIt-SNE, UMAP and PaCMAP projections in 2D.} 
    \label{fig:visual-comparison}
\end{figure*}
\end{document}